\newtheorem{theorem}{Theorem}[subsection]
\newtheorem{lemma}[theorem]{Lemma}
\newtheorem{definition}{Definition}
\newcommand{\rr}{\mathds{R}}
\DeclareMathOperator{\conv}{conv}
\DeclareMathOperator{\dist}{dist}
\title{Tverberg's theorem and multi-class support vector machines}
\author[Sober\'on]{Pablo Sober\'on}\address{Baruch College \& The Graduate Center, City University of New York, New York, NY 10010} 
\email{psoberon@gc.cuny.edu}
\thanks{he research of P. Sober\'on was supported by NSF CAREER award no. 2237324, NSF award no. 2054419 and a PSC-CUNY Trad B award.}
\subjclass{52A35, 52C35, 62R07, 62R40}
\keywords{
  Support vector machine, Tverberg's theorem}
\begin{document}

\begin{abstract}
    We show how, using linear-algebraic tools developed to prove Tverberg's theorem in combinatorial geometry, we can design new models of multi-class support vector machines (SVMs).  These supervised learning protocols require fewer conditions to classify sets of points, and can be computed using existing binary SVM algorithms in higher-dimensional spaces, including soft-margin SVM algorithms. We describe how the theoretical guarantees of standard support vector machines transfer to these new classes of multi-class support vector machines.  We give a new simple proof of a geometric characterization of support vectors for largest margin SVMs by Veelaert.
\end{abstract}

\maketitle

\section{Introduction}

Support vector machines (SVMs) are a supervised learning model for data classification with a wide range of applications \cites{Boser1992, hearst1998support, steinwart2008support}.  The underlying geometric problem is, given two finite sets $A, B$ of points in $\rr^d$, to find a hyperplane separating $A$ and $B$.  A key example are largest margin SVMs, in which the separating hyperplane maximizes the minimum distance to each set.  We assume that $\conv A \cap \conv B = \emptyset$ for such a hyperplane to exist.  If $\conv A \cap \conv B \neq \emptyset$, minimizing the number of misclassified points by a hyperplane is NP-hard, but one can use adaptations such as soft-margin SVM.

A variation of this model, multi-class support vector machines, arises when we want to classify more than two sets of points.  If we want to classify $k$ classes $A_1, \ldots, A_k$, the most common approaches are one-versus-all (1vA) and all-versus-all (AvA) models, both of which break the classification problem into many binary support vector machines.  In the first, we have to solve for $k$ support vector machines, each separating a single class from the union of the other $k-1$,  In the second, we solve for $\binom{k}{2}$ support vector machines separating each pair of classes.  Some optimization methods aggregate several SVMs into a single optimization problem in a higher-dimensional space, which can then be adjusted to be easier to solve \cites{Franc.2002, crammer2001algorithmic}.  Multiple other models for multi-class SVMs have been proposed \cites{duan2005best, hsu2002comparison}.

Many combinatorial properties of SVMs are related to classic results in discrete geometry, such as Radon's theorem \cites{Veelaert.2015, Adams2022}.  Radon's theorem states that \textit{given $d+2$ points in $\rr^d$, there exists a partition of them into two sets whose convex hulls intersect} \cites{Radon:1921vh}.  A well-known generalization of Radon's theorem is Tverberg's theorem, in which we seek to split a set of points into several subsets whose convex hulls intersect.  Tverberg proved that \textit{given $(k-1)(d+1)+1$ points in $\rr^d$, there exists a partition of them into $k$ sets whose convex hulls intersect} \cites{Tverberg:1966tb}.  The case $k=2$ is Radon's theorem.  There is active research around Tverberg's theorem, as it has led to important developments in discrete geometry and topological combinatorics \cites{Barany:2016vx, Blagojevic:2017bl, Barany:2018fy}.

A far-reaching tool to prove variations of Tverberg's theorem is a linear-algebraic technique devised by Sarkaria \cite{Sarkaria:1992vt} and simplified by B\'ar\'any and Onn \cite{Barany:1996bz}.  In addition to leading to one of the simplest known proofs of Tverberg's theorem, this technique is highly malleable and can be used to prove a multitude of variations of Tverberg's theorem.

The goal of this manuscript is to show a link between Tverberg's theorem and multi-class SVMs via the linear algebra techniques mentioned above.  The existence of a connection between multi-class SVMs and Tverberg's theorem was conjectured by Adams et al. \cites{Adams2022}, when they linked Radon's theorem to binary SVMs.  To have a multi-class SVM that does not missclassify any points, the (1vA) model requires each class to be separable from the union of the other $k-1$, and the (AvA) model requires any two classes to be separable.  We propose a new type of multi-class SVM which uses a weaker condition.   Applying our model for $k=2$ leads to classic SVMs.  Of course, since we do not ask any two $A_i$, $A_j$ to be separable, potential miss-classifications are unavoidable.  We only require $\bigcap_{i=1}^k \conv(A_i) = \emptyset$.  The output will be a family of $k$ closed half-spaces $H_1, \ldots, H_k$ such that 
\begin{itemize}
    \item For each $i=1,\ldots, k$ we have have $A_i \subset H_i$ and
    \item the half-spaces satisfy $\bigcap_{i=1}^k H_i = \emptyset.$
\end{itemize}

\begin{figure}
    \centerline{\includegraphics[width=\textwidth]{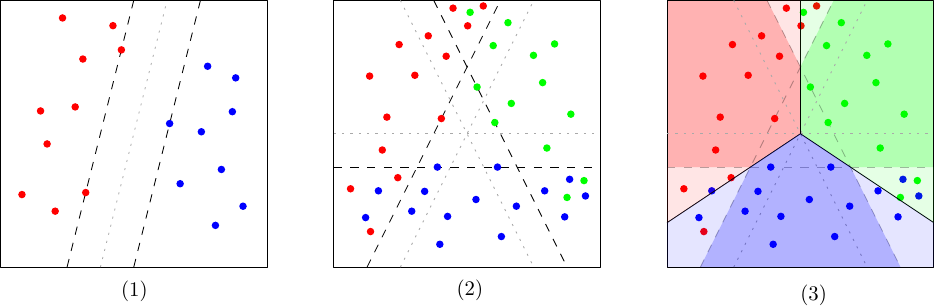}}
    \caption{(1) An example of an SVM, we emphasize the support hyperplanes parallel to the generated hyperplane for each class. (2) An example of an multi-class SVM under the proposed model.  Notice that it is not possible to separate any two classes of points with a hyperplane.  (3) The half-spaces in part (2) can be used to classify space using convex regions.  The model can distinguish regions where it is ambiguous.}
    \label{fig:example-full}
\end{figure}

We describe how the half-spaces can be used to split $\rr^d$ into $k$ convex regions, each corresponding to an $A_i$.  The model can also be used to distinguish the regions of ambiguity.  The subdivision of $\rr^d$ is most natural when $k \le d+1$.

We describe in \cref{tab:comparison} the complexity of computing these multi-class SVM.  We compare directly with the complexity of computing a single SVM, to highlight the influence of the dimension.

\begin{table}[]
    \centering
$\begin{tabular}{|c|c|}
\hline
     (AvA) & $\binom{k}{2} \tau (n/k,n/k; d)$  \\
     \hline
     (1vA) & $k \cdot \tau (n/k,n-(n/k);d)$ \\
     \hline
     (Simple TSVM) & $\tau(1,n-1;(d+1)(k-1))$ \\
     \hline
     (TSVM) & $O\Big(n \cdot \tau \Big(1,(d+1)(k-1)+1; (d+1)(k-1)\Big)\Big)$ (randomized) \\
      & $\tau((n/k)^k,1; d(k-1))$ (deterministic) \\
      \hline
\end{tabular}$
    \caption{Any linear SVM algorithm can be applied to the computation of our multi-class SVM, including soft-margin SVMs.  If we denote $\tau(a,b;d)$ the complexity of computing an SVM with $a+b$ data points in $\rr^d$ (one class with $a$ points and one with $b$ points), then the computational complexity in terms of $n$ of our results can be described as listed above.  We assume our original set o points has $k$ classes with $n/k$ points each.  We include the complexity of a naive approach to (AvA) and (1vA) for comparison.  The randomized algorithm for (TSVM) also has constant factors that depend on the product $dk$ but not $n$.\\
    Statistical guarantees would be the same as those running a linear SVM with the parameters above.  (Simple TSVM) and deterministic (TSVM) are equivalent to running a single SVM, while (TSVM), while randomized (TSVM) is equivalent to running $O(n)$ binary SVMs.}
    \label{tab:comparison}
\end{table}


Tverberg's theorem is a challenging algorithmic problem \cites{har2020journey}.  One key difference between the problem addressed in this manuscript and the problem of finding Tverbreg partitions is that when training SVMs the labels are assigned before-hand.  Tverberg's theorem has also been applied to multi-class logistic regression \cites{de2020stochastic}.  

Since the constructions are based on Sarkaria's linear-algebraic technique, we can deduce several combinatorial properties of these multi-class SVMs.  The model (simple TSVM) is invariant under orthogonal transformations, but not under translations.  The model (TSVM) is invariant under any isometry of $\rr^d$.  To prove these properties, a closer look at Sarkaria's method is needed, so the arguments presented here may be useful in the classic context of variations of Tverberg's theorem.

We also discuss the existence and properties of support vectors.  It is known that for any two separable sets $A, B \subset \rr^d$, there exist $A' \subset A, B' \subset B$ such that $|A' \cup B'| \le d+1$ and such that the largest-margin SVM induced by $A, B$ is the same as the largest-margin SVM induced by $A', B'$.  For (TSVM) and (simple TSVM) a similar property holds.  For any $k$-tuples of sets $A_1, \ldots, A_k$, there is a $(k-1)(d+1)$-subset of $A_1 \cup \ldots \cup A_k$ that induces the same (TSVM).  The same holds for (simple TSVM).  In either case we call this $(k-1)(d+1)$-subset the \textit{support vectors} of the multi-class SVM.

The manuscript is organized as follows.  First, we present in \cref{sec:projection} a new proof of a characterization of critical points in largest-margin SVMs.  In \cref{sec:tools} we describe the linear-algebraic tools needed for our constructions.  In \cref{sec:construction} we describe the models (TSVM) and (simple TSVM), and their main properties.  In \cref{sec:subdivision} we discuss the induced partitions of $\rr^d$ and finally in \cref{sec:equivariance} we study how the model behaves when we apply orthogonal transformations to the sets of points.


\section{Projection of support vectors.}\label{sec:projection}

Given two finite sets of points $A, B$ in $\rr^d$ that are linearly separable, let $H$ be a separating hyperplane at maximal distance from $A$ and $B$.  We denote by $\varepsilon$ this distance, so
\[
\varepsilon = \dist(H, \conv A) = \dist (H, \conv B)
\]

We say that a point in $a \in A$ is a support vector if $\dist(a, H) = \varepsilon$, and similarly for points in $B$.  We assign labels to the points so that the points of $A$ assigned \textit{positive} and the points of $B$ are assigned \textit{negative}.

One interesting property about the projections of the support vectors is that the convex hulls of the projections onto the separating hyperplane of each side intersect, as in \cref{fig:projection}.  This was proven independently by Veelaert and by Adams, Carr, and Farnell \cites{Veelaert.2015, Adams2022}.  One of the proofs involves the Karush--Kuhn--Tucker theorem and the other Householder transformations.  We present an elementary proof.

\begin{theorem}
Given a separable set of points in $\rr^d$ with two labels, the convex hulls of the projections of the negative and positive support vectors onto the induced largest margin SVM intersect.
\end{theorem}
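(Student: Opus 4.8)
The plan is to set up the largest-margin SVM as the solution of an optimization problem and extract the geometry from its optimality conditions, but phrased entirely in elementary convex-geometry terms rather than via KKT multipliers. Write the largest-margin hyperplane as $H = \{x : \langle w, x\rangle = c\}$ with $\|w\|=1$, so that the two support hyperplanes are $H^+ = \{x : \langle w,x\rangle = c+\varepsilon\}$ (containing the positive support vectors and with all of $A$ on the side $\langle w,x\rangle \ge c+\varepsilon$) and $H^- = \{x : \langle w,x\rangle = c-\varepsilon\}$ (analogously for $B$). Let $P^+$ denote the set of positive support vectors and $P^-$ the negative ones, and let $\pi$ be orthogonal projection onto $H$; equivalently, since projection onto $H$ is just a translation of projection onto $H^+$ or $H^-$ along $w$, it suffices to show $\conv \pi(P^+) \cap \conv \pi(P^-) \neq \emptyset$ after identifying all these parallel hyperplanes with $w^\perp$.

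The key step is the following dichotomy. Suppose, for contradiction, that $\conv\{a - \langle w,a\rangle w : a \in P^+\}$ and $\conv\{b - \langle w,b\rangle w : b \in P^-\}$ are disjoint subsets of the hyperplane $w^\perp$. Then inside $w^\perp$ there is a separating functional: a unit vector $u \perp w$ and a scalar $t$ with $\langle u, a\rangle \ge t$ for all $a \in P^+$ and $\langle u, b\rangle \le t$ for all $b \in P^-$, with at least one inequality strict on each side (or, more carefully, a genuinely separating hyperplane since both sets are compact). I would then \emph{tilt} the hyperplane $H$ using $u$: consider the family $w_\theta = (\cos\theta)\, w + (\sin\theta)\, u$ for small $\theta > 0$, and correspondingly adjusted offsets, and show that the minimum of $\langle w_\theta, a\rangle$ over $a \in A$ minus the maximum of $\langle w_\theta, b\rangle$ over $b \in B$ strictly exceeds $2\varepsilon$ for small $\theta$. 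The derivative at $\theta = 0$ of $\langle w_\theta, a\rangle$ is $\langle u, a\rangle$, and because only the support vectors are active (all non-support points lie strictly beyond the support hyperplanes, so they stay on the correct side for small $\theta$ by continuity and a compactness/finiteness argument), the first-order change in the margin is governed precisely by $\min_{a\in P^+}\langle u,a\rangle - \max_{b\in P^-}\langle u,b\rangle \ge 0$; the separation strictness then pushes this to a genuine increase, or, if the first-order term vanishes, a second-order perturbation in the offset $c$ recovers it. After renormalizing $w_\theta$ to a unit vector one still gains margin, contradicting maximality of $\varepsilon$.

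I expect the main obstacle to be the bookkeeping in that perturbation argument: handling the normalization $\|w_\theta\|$ (which is $1 + O(\theta^2)$ and so does not affect the first-order analysis, but must be addressed), correctly choosing the new offset $c_\theta$ so that the hyperplane stays centered between the two classes, and treating the borderline case where the separating functional $u$ in $w^\perp$ is only weakly separating (both sets touching the same value $t$), which forces the second-order argument. A cleaner alternative that avoids calculus altogether is to argue directly: if $\conv\pi(P^+) \cap \conv\pi(P^-) = \emptyset$, pick the pair of closest points $p \in \conv\pi(P^+)$, $q \in \conv\pi(P^-)$, let $u = (p-q)/\|p-q\| \in w^\perp$, and verify by an explicit two-variable computation (optimizing over $\theta$ and the shift along $w$) that the hyperplane with normal $(\cos\theta)w + (\sin\theta)u$ strictly separates $A$ and $B$ with margin $> \varepsilon$ once $\theta$ is small and positive — the gain being exactly the product of $\sin\theta$ with the positive gap $\|p-q\|/2$ to leading order, against a loss of order $\theta^2$ from the points of $A \cup B$ closest to $H$ along the old direction. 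Either way, the whole proof is elementary: separation of disjoint compact convex sets plus a one-parameter rotation, with no KKT machinery or Householder transformations.
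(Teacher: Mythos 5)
Your proposal is correct and takes essentially the same route as the paper: assume the projected convex hulls are disjoint, separate them by a functional inside the separating hyperplane, and rotate $H$ slightly in the two-dimensional plane spanned by $w$ and the separating direction $u$ so that every support vector gains distance while the non-support points stay strictly beyond $\varepsilon$, contradicting maximality of the margin. The complications you anticipate largely evaporate: since $u \perp w$ one has $\|w_\theta\| = 1$ exactly, and disjoint compact convex sets admit a \emph{strictly} separating functional, so the first-order gain $\sin\theta\,\bigl(\min_{a\in P^+}\langle u,a\rangle - \max_{b\in P^-}\langle u,b\rangle\bigr)$ is genuinely positive and no second-order argument is needed; this is precisely the paper's ``rotate $\ell$ slightly around $p$'' step written in coordinates.
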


\begin{figure}
    \centering
    \includegraphics{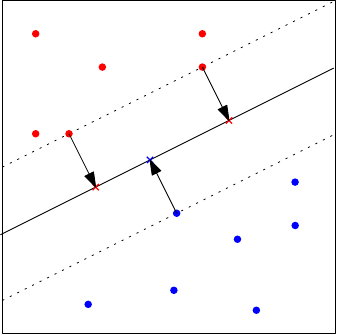}
    \caption{An example of a largest-margin SVM with two sets.  If we project the support vectors onto the separating hyperplane, the convex hulls of the projections of different sides intersect.}
    \label{fig:projection}
\end{figure}

\begin{proof}
Let $L$ be the set of labeled points.  Let $H$ be the separating hyperplane at maximum distance from the labeled sets, and let $S_+$, $S_-$ be the positive and negative support vectors, respectively.  Let $\varepsilon > 0$ be the distance of the support vectors to $H$.
This means that 
\begin{align*}
   \dist(x,H)  & = \varepsilon \qquad \mbox{for $x \in S_+ \cup S_-$} \\
\dist(x,H) & > \varepsilon \qquad \mbox{for $x \in L\setminus(S_+ \cup S_-)$}.
\end{align*}

Let $P_+$ be the orthogonal projection of $S_+$ onto $H$, and $P_-$ be the orthogonal projections of $S_-$ onto $H$.  We assume that $\conv P_+ \cap \conv P_- = \emptyset$ and look for a contradiction.

Since $\conv P_+ \cap \conv P_- = \emptyset$, there exists a co-dimension one affine subspace $H'$ of $H$ that separates $P_+$ and $P_-$.  Notice that $H'$ is a co-dimension two affine subspace of $\rr^d$.

Let us project all the points into the two-dimensional subspace $(H')^{\perp}$.  We denote this projection by $\pi$.  In $(H')^{\perp}$, $\pi(H)$ is a line $\ell$ and $\pi(H')$ is a point $p$.  Let $\ell_2$ be the orthogonal line to $\ell$ through $p$.

The lines $\ell$ and $\ell_2$ split $(H')^{\perp}$ into four quadrants.  Since $\conv P_+ \cap \conv P_- = \emptyset$, the points of $\pi(S_+)$ and those of $\pi(S_-)$ are separated by $\ell_2$.  They are also separated by $\ell$ by construction, so $\pi(S_+)$ and $\pi(S_-)$ are in opposite quadrants.

This means that we can rotate $\ell$ slightly around $p$ so that its distance to each point in $\pi(S_+)$ and $\pi(S_-)$ increases.  If the angle of rotation is small enough, the distance of $\pi^{-1}(\ell)$ to the rest of the points in $L$ remains strictly larger than $\varepsilon$.  This contradicts $H$ being the largest-margin SVM. 
\end{proof}

\section{Linear-algebraic tools}\label{sec:tools}

In this section, we introduce Sarkaria's construction to tackle Tverberg-type problems.  Suppose we are given $k$ sets $A_1, \ldots, A_k$ in $\rr^d$.  We introduce $v_1, \ldots, v_k$, which are the vertices of a regular simplex in $\rr^{k-1}$ centered at the origin.  We further assume that each $v_i$ is a unit vector.  A crucial property of this $k$-tuple is that its linear dependences are precisely the linear combinations in which all coefficients are equal.   For each $i$ we associate $v_i$ to $A_i$.  Given $x \in A_i$ we first append a coordinate $1$ to make it into a vector in $\rr^{d+1}$, $\bar{x} = \begin{bmatrix} x \\ 1\end{bmatrix}$.  Then, we take the tensor product with its corresponding $v_i$, defining  ${S(x) = \bar{x} \otimes v_i = \bar{x} v_i^T \in \rr^{(d+1)(k-1)}.}$

In this manuscript we treat $\rr^{(d+1)(k-1)}$ as the set of $(d+1)(k-1)\times 1$ vectors and as the set of $(d+1)\times(k-1)$ matrices interchangeably.

Finally, for $i=1,\ldots, k$, we define the set
\[
Y_i = \{S(x) : x \in A_i, \ i=1,\ldots, k\} \subset \rr^{(d+1)(k-1)}
\]

The main difference with our approach and the one by  B\'ar\'any and Onn is that for each point in $A_1 \cup \ldots A_k$ we already know to which class it belongs, so it yields a unique point in the higher-dimensional space.  When one wants to prove Tverberg's theorem, we have to assign classes to unlabeled sets of points, so each point in $\rr^d$ is represented by a $k$-tuple in $\rr^{(d+1)(k-1)}$.

The main reason why this transformation can be used to study Tverberg-type problems and why we can use it in the context of multi-class SVMs is the following lemma.

\begin{lemma}\label{lemma:sarkaria}
Let $k, d$ be positive integers.  Let $A_1, \ldots, A_k$ be finite sets of points in $\rr^d$ such that $\bigcap_{i=1}^k \conv (A_i) = \emptyset$.  Then, for $Y_1, \ldots, Y_k$ defined as above, $0 \not\in \conv\left( \bigcup_{i=1}^k Y_i\right)$.
\end{lemma}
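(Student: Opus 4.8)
\textit{The plan} is to argue by contradiction. Suppose $0 \in \conv\left(\bigcup_{i=1}^k Y_i\right)$. Then there are coefficients $\lambda_{i,x} \ge 0$ (for $x \in A_i$, $i = 1, \ldots, k$) summing to $1$ such that $\sum_{i=1}^k \sum_{x \in A_i} \lambda_{i,x}\, S(x) = 0$. Recalling that $S(x) = \bar{x} v_i^T$ for $x \in A_i$, I would collect the terms class by class and set $w_i = \sum_{x \in A_i} \lambda_{i,x}\, \bar{x} \in \rr^{d+1}$, so that the dependence becomes the matrix identity $\sum_{i=1}^k w_i v_i^T = 0$ in $\rr^{(d+1)\times(k-1)}$.

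Next I would read this identity one row at a time. For each coordinate $j \in \{1, \ldots, d+1\}$, looking at the $j$-th rows gives $\sum_{i=1}^k (w_i)_j\, v_i^T = 0$, that is, $\sum_{i=1}^k (w_i)_j\, v_i = 0$ in $\rr^{k-1}$. Here I invoke the crucial property of the vertices of the regular simplex stated above: the only linear dependences among $v_1, \ldots, v_k$ are those with all coefficients equal. Hence $(w_1)_j = (w_2)_j = \cdots = (w_k)_j$ for every $j$, which means $w_1 = w_2 = \cdots = w_k =: w$.

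Finally I would translate this back to $\rr^d$. Since the last coordinate of $\bar{x}$ is $1$, the last coordinate of $w_i$ equals $\mu_i := \sum_{x \in A_i} \lambda_{i,x}$; as all the $w_i$ coincide, all the $\mu_i$ coincide, and since $\sum_{i=1}^k \mu_i = 1$ we get $\mu_i = 1/k > 0$ for every $i$. Dividing, $\frac{1}{\mu_i} w_i$ is a vector in $\rr^{d+1}$ with last coordinate $1$ whose first $d$ coordinates form a convex combination of the points of $A_i$; dropping the last coordinate, it gives a point of $\conv(A_i)$. But $\frac{1}{\mu_i} w_i = k w$ does not depend on $i$, so this single point lies in $\bigcap_{i=1}^k \conv(A_i)$, contradicting the hypothesis that this intersection is empty.

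I do not expect a genuine obstacle here: the argument is essentially bookkeeping of the tensor lift together with a single structural fact about the $v_i$. The step most worth stating carefully is the passage from $\sum_i w_i v_i^T = 0$ to $w_1 = \cdots = w_k$, since this is precisely (and solely) where the geometry of the simplex vertices enters; everything before it is just rewriting the convexity relation, and everything after it is just normalizing to recover an honest point of each $\conv(A_i)$.
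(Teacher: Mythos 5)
Your proposal is correct and follows essentially the same route as the paper's proof: group the convex combination by class, use the fact that the only linear dependences among $v_1,\ldots,v_k$ are the ones with equal coefficients to force the class sums $w_i$ to coincide, and then read off a common point of the $\conv(A_i)$ from the last coordinate. Your row-by-row reading of $\sum_i w_i v_i^T = 0$ is just a slightly more explicit justification of the step the paper summarizes as ``this carries through the tensor product.''
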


\begin{proof}
Let $A = \bigcup_{i=1}^k A_i$ and $Y = \bigcup_{i=1}^k Y_i$.  We prove the contrapositive.  Assume that the origin in $\rr^{(d+1)(k-1)}$ is in the convex hull of $Y$.  We want to show that the convex hulls of the sets $A_i$ intersect.  Then, for each $x \in A$ there exists a non-negative coefficient $\alpha(x)$ such that $\sum_{x \in A}\alpha(x) = 1$ and
\begin{align*}
0 = \sum_{x \in A} \alpha(x) S(x) & = \sum_{x \in A_1} \alpha(x) S(x) + \ldots +  \sum_{x \in A_k} \alpha(x) S(x) \\
& = \sum_{x \in A_1} \alpha(x) (\bar{x} \otimes v_1) + \ldots +  \sum_{x \in A_K} \alpha(x) (\bar{x} \otimes v_k)    \\
& = \left( \sum_{x \in A_1} \alpha(x) \bar{x}\right) \otimes v_1 + \ldots +  \left( \sum_{x \in A_k} \alpha(x) \bar{x}\right) \otimes v_k.
\end{align*}

If we look at the linear dependences of $v_1, \ldots, v_k$ in $\rr^{k-1}$, we can see that $\beta_1 v_1 + \ldots + \beta_k v_k = 0$ if an only if $\beta_1 = \ldots = \beta_k$.  This carries through the tensor product and we have ${\sum_{x \in A_1} \alpha(x) \bar{x} = \ldots = \sum_{x \in A_k} \alpha(x) \bar{x}.}$

This is an equality in $\rr^{d+1}$.  If we look at the last coordinate, we have
$\sum_{x \in A_1} \alpha(x) = \ldots = \sum_{x \in A_k} \alpha(x).$
Since the total sum of the coefficients was one, each of the sums above must be $1/k$.  If we look at the first $d$ coordinates and multiply each equation by $k$, we have
\[
\sum_{x \in A_1} (k\alpha(x))x = \ldots = \sum_{x \in A_k} (k\alpha(x))x.
\]
and each of the terms above is a convex combination.  This means that the convex hulls of the $A_i$ have non-empty intersection, $\bigcap_{i=1}^k \conv (A_i) \neq \emptyset$.
\end{proof}

Therefore, the origin in $\rr^{(d+1)(k-1)}$ can be separated from $Y$ by a hyperplane.  We can find this hyperplane with any existing algorithm for SVMs, which is the central point of this manuscript.  If we have a hyperplane separating a set from the origin in $\rr^{(d+1)(k-1)}$, we want to obtain a set of half-spaces as described in the introduction.

In other words, we need to be able to map hyperplanes in $\rr^{(d+1)(k-1)}$ into $k$-tuples of hyperplanes in $\rr^d$ explicitly.  This has been done recently \cites{Sarkar:2020uk}.  We describe the process below.

Let $\Pi: \rr^{d+1} \to \rr^d$ be the function that erases the last coordinate.  For each $y \in \rr^{(d+1)(k-1)}$ let us think of $y$ as a $(d+1) \times (k-1)$ matrix.  For $i \in [k]$ we define the function
\begin{align*}
    f_i : \rr^{(d+1)(k-1)} & \to \rr^d \\
    y & \mapsto \Pi (y v_i) 
\end{align*}
where $yv_i$ is considered as a product of matrices.

\begin{lemma}
If $x \in A_i$, then $f_i(S(x))=x$.
\end{lemma}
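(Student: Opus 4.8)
The plan is to compute $f_i(S(x))$ straight from the definitions, using the single structural fact that each $v_i$ is a unit vector. First I would unwind $S(x)$ as the $(d+1)\times(k-1)$ matrix $\bar{x}\,v_i^T$, where $\bar{x} = \begin{bmatrix} x \\ 1 \end{bmatrix} \in \rr^{d+1}$ and $v_i \in \rr^{k-1}$ is regarded as a column vector, so that $\bar{x}\,v_i^T$ is a genuine rank-one $(d+1)\times(k-1)$ matrix. Under the identification of $\rr^{(d+1)(k-1)}$ with $(d+1)\times(k-1)$ matrices, this is exactly the point $S(x)$, and the product $S(x)\,v_i$ appearing in the definition of $f_i$ is matrix-times-vector, landing in $\rr^{d+1}$.

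Next I would apply associativity of matrix multiplication together with the normalization $v_i^T v_i = \|v_i\|^2 = 1$ to obtain
\[
S(x)\, v_i = (\bar{x}\, v_i^T)\, v_i = \bar{x}\,(v_i^T v_i) = \bar{x}.
\]
Finally, since $\Pi \colon \rr^{d+1} \to \rr^d$ deletes the last coordinate and $\bar{x}$ was built from $x$ precisely by appending a $1$ in that last coordinate, we get $f_i(S(x)) = \Pi(\bar{x}) = x$, which is the assertion. (As a remark, this says $f_i \circ S$ restricted to $A_i$ is the identity, i.e.\ $S|_{A_i}$ is a section of $f_i$.)

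There is essentially no obstacle here: the only thing that needs care is the bookkeeping between the two ways we view elements of $\rr^{(d+1)(k-1)}$, as long vectors versus as $(d+1)\times(k-1)$ matrices, and verifying that $yv_i$ in the definition of $f_i$ is the matrix–vector product in that identification. Once that convention is pinned down, the computation above is a one-line consequence of $v_i$ being a unit vector; note that the regular-simplex hypothesis on $v_1,\dots,v_k$ is not used for this particular lemma, only $\|v_i\|=1$.
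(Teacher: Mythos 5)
Your proposal is correct and is essentially identical to the paper's proof: both unwind $f_i(S(x)) = \Pi(\bar{x}\,v_i^T v_i) = \Pi(\bar{x}) = x$ using only the fact that $v_i$ is a unit vector. Your added remarks on the matrix/vector identification and on the simplex hypothesis being unnecessary here are accurate but not a different argument.
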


\begin{proof}
A simple computation shows that
\[
f_i(S(x)) = f_i(\bar{x} v_i^T) = \Pi (\bar{x} v_i^T v_i) = \Pi (\bar{x}) = x.
\]
The third equality follows since $v_i$ is a unit vector.
\end{proof}

For each $i \in [k]$, consider the $d$-dimensional affine subspace $U_i = \{\bar{x} \otimes v_i : x \in \rr^d\} \subset \rr^{(d+1)(k-1)}$.  Given a half-space $H$ in $\rr^{(d+1)(k-1)}$, consider the $k$ half-spaces in $\rr^d$
defined by $H_i = f_i (U_i \cap H)$ for $i\in [k]$.

\begin{lemma}
Let $H$ be a closed half-space in $\rr^{(k-1)(d+1)}$. If $0 \not\in H $ then $\bigcap_{i=1}^k H_i = \emptyset$.
\end{lemma}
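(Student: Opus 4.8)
The plan is to write the ambient half-space $H$ as a superlevel set of a linear functional, push that functional through the matrix picture of $\rr^{(d+1)(k-1)}$, and then collapse the $k$ induced inequalities using the single identity $v_1 + \cdots + v_k = 0$ (the vertices of a regular simplex centered at the origin sum to zero), which is the same feature of the $v_i$ that powered \cref{lemma:sarkaria}.

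First I would fix a linear functional $c$ and a scalar $b$ with $H = \{y : \langle c, y\rangle \ge b\}$; since $\langle c, 0\rangle = 0$ and $0 \notin H$, necessarily $b > 0$. Viewing $y$ and $c$ as $(d+1)\times(k-1)$ matrices and using the Frobenius pairing $\langle c, y\rangle = \operatorname{tr}(c^T y)$, a point of $U_i$ has the unique form $y = \bar x v_i^T$, and the cyclic invariance of the trace gives $\langle c, \bar x v_i^T\rangle = \operatorname{tr}(v_i^T c^T \bar x) = (c v_i)^T \bar x = \langle c v_i, \bar x\rangle$. Writing $c v_i = \begin{bmatrix} w_i \\ t_i \end{bmatrix}$ with $w_i \in \rr^d$ and $t_i \in \rr$, and recalling $f_i(\bar x v_i^T) = x$, the set $H_i = f_i(U_i \cap H)$ is exactly $\{x \in \rr^d : \langle w_i, x\rangle + t_i \ge b\}$.

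Next I would assume, toward a contradiction, that some $z$ lies in $\bigcap_{i=1}^k H_i$. Then $\langle w_i, z\rangle + t_i \ge b$ for all $i$, and summing over $i$ yields $\bigl\langle \sum_i w_i, z\bigr\rangle + \sum_i t_i \ge k b$. But $\sum_{i=1}^k c v_i = c \bigl(\sum_{i=1}^k v_i\bigr) = 0$, so $\sum_i w_i = 0$ and $\sum_i t_i = 0$; the left-hand side vanishes, forcing $0 \ge k b$, contradicting $b > 0$. Hence $\bigcap_{i=1}^k H_i = \emptyset$.

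There is no real obstacle beyond this bookkeeping; the one thing to keep an eye on is the degenerate case $w_i = 0$, in which $H_i$ is either empty (and then the conclusion is immediate) or all of $\rr^d$ (and then the inequality $\langle w_i, z\rangle + t_i \ge b$ still holds at any $z$, so the summation argument is unaffected). The conceptual content is simply that pairing with $c$ after tensoring with $v_i$ collapses to pairing $\bar x$ with $c v_i$, so the ``$\otimes v_i$ slot'' of the normal vector of $H$ becomes the affine functional cutting out $H_i$, and the conclusion is then forced by $\sum_i v_i = 0$.
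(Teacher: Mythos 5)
Your proof is correct and follows essentially the same route as the paper: both write $H$ as a superlevel set of a linear (Frobenius/trace) functional with positive threshold, observe that membership of $\bar x \otimes v_i$ in $H$ gives one inequality per class, and sum the $k$ inequalities to reach a contradiction via $\sum_{i=1}^k v_i = 0$. The only cosmetic difference is that you make the induced affine functional $\langle c v_i, \bar x\rangle$ explicit, whereas the paper keeps everything inside the trace.
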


\begin{proof}
As before, let's consider $\rr^{(d+1)(k-1)}$ as the set of $(d+1) \times (k-1)$ matrices.  Each closed half-space $H$ can be defined using a linear functional and a constant.  Using the Frobenius product, we can express $H$ using a $(k-1)\times (d+1)$ matrix $M$ and a constant $\lambda$ such that
\[
H = \{S \in \rr^{(k-1)(d+1)} : \operatorname{tr}(S M) \ge \lambda\}.
\]
Since the origin is not contained in $H$, we can assume that $\lambda > 0$.  Suppose on the contrary that there exists an $x \in \rr^d$ so that $x \in \bigcap_{i=1}^k f_i (U_i \cap H)$ and we look for a contradiction.  In other words, for $i=1,\ldots, k$ we have $\bar{x} \otimes v_i \in H$, so
$\operatorname{tr} (\bar{x}v_i^T M) \ge \lambda > 0$.

If we write each of the $k$ inequalities as $i$ varies and add them, we have
\[
0< k \lambda \le \sum_{i=1}^k \operatorname{tr} (\bar{x}v_i^T M) = \operatorname{tr} \left(\bar{x}\left({\sum_{i=1}^k v_i^T}\right) M\right) = 0.
\]
The last equality follows as $\sum_{i=1}^k v_i = 0$.  This is the contradiction we wanted.
\end{proof}

Now we have all the ingredients to define a multi-class SVM.  Another important subspace for our computations is the following $d(k-1)$-dimensional space
\[
R = \{ y \in \rr^{(d+1)(k-1)}: \mbox{the last row of $y$, as a $(d+1)\times (k-1)$ matrix, is zero} \}.
\]

This subspace has been used previously to prove some variations of Tverberg's theorem with some coloring conditions added to the set \cites{soberon2015equal}.  Some particular translates of $R$ will also be useful.  For $i=1,\ldots, k$ we define $R_i = \{ S \in \rr^{(k-1)(d+1)}: \mbox{the last row of $S$ is $v_i^T$} \}.$

Notice that $U_i \subset R_i$ for each $i=1,\ldots, k$.

\begin{lemma}\label{lem:barycenter}
Let $z_1, z_2, \ldots, z_k \in \rr^{(d+1)(k-1)}$ such that $z_i \in U_i$ for each $i=1,\ldots, k$.  The only point in $R \cap \conv\{z_1,\ldots, z_k\}$ is the barycenter of the set $\{z_1, \ldots, z_k\}$.
\end{lemma}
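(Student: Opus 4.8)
The plan is to track the single piece of data that distinguishes the affine subspaces $R, R_1, \dots, R_k$: the last row of an element of $\rr^{(d+1)(k-1)}$, regarded as a $(d+1)\times(k-1)$ matrix. First I would record that, since $z_i \in U_i \subset R_i$, the last row of $z_i$ equals $v_i^T$. Concretely, writing $z_i = \bar{x}_i \otimes v_i = \bar{x}_i v_i^T$ for some $x_i \in \rr^d$, the bottom entry of $\bar{x}_i$ is $1$, so the last row of $z_i$ is $1\cdot v_i^T = v_i^T$.

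Next I would take an arbitrary point $z = \sum_{i=1}^k \lambda_i z_i$ of $\conv\{z_1,\dots,z_k\}$, with $\lambda_i \ge 0$ and $\sum_{i=1}^k \lambda_i = 1$, and compute its last row, which is $\sum_{i=1}^k \lambda_i v_i^T$. The condition $z \in R$ says precisely that this row vanishes, i.e. $\sum_{i=1}^k \lambda_i v_i = 0$ in $\rr^{k-1}$.

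Now I would invoke the defining property of the vertices $v_1, \dots, v_k$ of the regular simplex centered at the origin, already used in the proof of \cref{lemma:sarkaria}: a linear relation $\sum_{i=1}^k \beta_i v_i = 0$ holds if and only if $\beta_1 = \dots = \beta_k$. Applied with $\beta_i = \lambda_i$, this forces all the $\lambda_i$ to be equal, and together with $\sum_{i=1}^k \lambda_i = 1$ it gives $\lambda_i = 1/k$ for every $i$. Hence the only candidate point in $R \cap \conv\{z_1,\dots,z_k\}$ is the barycenter $\frac1k\sum_{i=1}^k z_i$. Finally I would verify that this point really does lie in $R$: its last row is $\frac1k\sum_{i=1}^k v_i^T = 0$ because $\sum_{i=1}^k v_i = 0$, and it is manifestly in the convex hull. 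This shows $R \cap \conv\{z_1,\dots,z_k\}$ consists of exactly the barycenter, as claimed.

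There is essentially no serious obstacle here; the statement falls out of the last-row bookkeeping. The one point that warrants care is that the relevant property of the $v_i$ is about the coefficients being \emph{equal} rather than \emph{zero}, so one should apply it to the weights $\lambda_i$ themselves (not to the differences $\lambda_i - 1/k$) and then use the normalization $\sum_i \lambda_i = 1$ to pin down the common value $1/k$.
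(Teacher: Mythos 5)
Your proof is correct and follows essentially the same route as the paper: look at the last row of a convex combination $\sum_i \lambda_i z_i$, use that membership in $R$ forces $\sum_i \lambda_i v_i = 0$, and invoke the fact that the only linear dependences among $v_1,\dots,v_k$ have all coefficients equal. The only (harmless) addition is your explicit check that the barycenter itself lies in $R$, which the paper leaves implicit.
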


\begin{proof}
Consider each $z_i$ as a $(d+1)\times (k-1)$ matrix.  Suppose that $\lambda_1z_1 + \ldots + \lambda_k z_k$ is a convex combination in $R$.  If we look at the last row of this linear combination we have $\lambda_1 v_1 + \ldots + \lambda_k v_k = 0$.  This means that $\lambda_1 = \ldots = \lambda_k$, as we wanted.
\end{proof}

\section{Construction and basic properties of multi-class SVM}\label{sec:construction}

We are now ready to formalize the multiclass SVMs described in the introduction.  Given $k$ sets $A_1, \ldots, A_k$ in $\rr^d$ whose convex hulls do not all overlap, we seek a family of $k$ half-spaces $H_1,\ldots, H_k$ such that $A_i \subset H_i$ for each $i=1,\ldots, k$ and so that the half-space $H_1, \ldots, H_k$ do not all intersect.  For the following definition we need the subspaces $U_i = \{\bar{x} \otimes v_i : x \in \rr^d\}$ and their associated functions $f_i$ defined above.

\begin{definition}[Simple TSVM]\label{def:simpleTSVM}
Let $A_1, \ldots, A_k$ be finite families of points in $\rr^d$ whose convex hulls do not intersect.  We define the multi-class support vector machine (Simple TSVM) as a family of $k$ closed half-spaces $H_1, \ldots, H_k$ obtained as follows.  First, for each $x \in A_i$ construct the point $S(x) = \bar{x} v_i^T \in \rr^{(d+1)(k-1)}$.  Let $Y$ be the collection of all points obtained this way.  Find $H$ the closed half-space in $\rr^{(d+1)(k-1)}$ that contains $Y$ and whose distance from the origin is maximal.  For $i=1,\ldots, k$ the half-space $H_i$ is defined as $H_i = f_i(U_i \cap H)$.
\end{definition}

The computation of (simple TSVM) consists of finding the distance from $Y$ to the origin.  We can also think of this as finding the largest-margin SVM in $\rr^{(d+1)(k-1)}$ that separates the origin from $Y$, and then doubling the distance to the origin.  The discussion in the previous section shows that this multi-class support vector machine satisfies the desired properties.  For a \textbf{soft-margin} version, it suffices to compute in $\rr^{(d+1)(k-1)}$ an SVM with one class equal to $Y$ and the other equal to $\{0\}$.  If we denote by $\tau(a,b;d)$ the complexity of an algorithm to compute an SVM with data points in $\rr^d$ and two classes of size $a$ and $b$, then the complexity of computing (simple TSVM) is $\tau(n,1;(d+1)(k-1))$, where $n$ is the number $|A_1|+ \dots +|A_k|$ of data points.  Any other performance metrics we have for an SVM transfer to (simple TSVM) if we do the change of parameter as outlined above.

For the second type of multi-class SVM, we consider the following alternative definition.  Recall that in the space of $(d+1)\times(k-1)$ matrices we denoted by $R$ the subspace where the last row is equal to zero.

\begin{definition}[TSVM]\label{def:TSVM}
Let $A_1, \ldots, A_k$ be finite families of points in $\rr^d$ whose convex hulls do not intersect.  We define the multi-class support vector machine (TSVM) as a family of $k$ closed half-spaces $H_1, \ldots, H_k$ obtained as follows.  First, for each $x \in A_i$ construct the point $S(x) = \bar{x} v_i^T \in \rr^{(d+1)(k-1)}$.  Let $Y$ be the collection of all points obtained this way and consider $P = R \cap \conv(Y)$.  Compute $p$ the closest point of $P$ to the origin, and $H$ the closed half-space in $\rr^{(d+1)(k-1)}$ that contains $Y$, whose boundary hyperplane contains $p$, and whose distance from the origin is maximal.  The half-spaces $H_i$ are defined as $H_i = f_i(U_i \cap H)$.
\end{definition}

Even though this definition is more involved it has two big advantages.  First, it is stable under translations of the sets of points.  Second, in the case $k=2$ it is precisely a largest-margin SVM.  We prove these properties in the next section.  Just like SVM have critical points, any (TSVM) is fixed by a small set of points.

\begin{theorem}\label{thm:simple-TSVM-support}
Let $A_1, \ldots, A_k$ be $k$ finite sets in $\rr^d$ such that $\bigcap_{i=1}^k\conv A_i = \emptyset$.  We can find subsets $A_1' \subset A_1, \ldots, A_k'\subset A_k$ such that $A_1', \ldots, A'_k$ induces the same (simple TSVM) as $A_1, \ldots, A_k$ and such that $|A_1'|+\ldots+|A_k'|\le (d+1)(k-1)$
\end{theorem}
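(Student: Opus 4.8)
The plan is to reduce the entire (simple TSVM) to a single geometric datum and then apply a Carath\'eodory-type argument on a proper face. Write $Y_i = \{S(x) : x \in A_i\} \subset \rr^{(d+1)(k-1)}$ and $Y = \bigcup_{i=1}^k Y_i$. By \cref{lemma:sarkaria}, $0 \notin \conv(Y)$, so $\conv(Y)$ has a well-defined nearest point $p \neq 0$ to the origin. The closed half-space $H$ of \cref{def:simpleTSVM} — the one containing $Y$ at maximal distance from $0$ — is exactly $H^* = \{y : \langle y, p\rangle \ge |p|^2\}$: it contains $\conv(Y)$ by the variational characterization of the metric projection, its distance to $0$ is $|p| = \dist(0,\conv(Y))$, which is optimal since any half-space containing $\conv(Y)$ is at distance at most $\dist(0,\conv(Y))$ from $0$, and a Cauchy--Schwarz argument shows $H^*$ is the unique optimizer. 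Since each $H_i = f_i(U_i \cap H)$, the whole (simple TSVM) is determined by $p$. Hence it suffices to extract $Y' \subseteq Y$ with $|Y'| \le (d+1)(k-1)$ such that the nearest point of $\conv(Y')$ to the origin is still $p$, and then set $A_i' = \{x \in A_i : S(x) \in Y'\}$.

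To build $Y'$ I would use that $p$ lies on a \emph{proper} face of the polytope $\conv(Y)$. From $\langle y, p\rangle \ge |p|^2 > 0$ for all $y \in \conv(Y)$, with equality at $y = p$, the affine hyperplane $\Gamma = \{y : \langle y, p\rangle = |p|^2\}$ supports $\conv(Y)$, so $F := \conv(Y) \cap \Gamma$ is a face contained in $\Gamma$, and $\Gamma$ has dimension $(d+1)(k-1)-1$. By the standard description of faces of polytopes, $F = \conv(Y \cap \Gamma)$, so $p \in \conv(Y \cap \Gamma)$; applying Carath\'eodory's theorem inside $\Gamma$ yields a subset $Y' \subseteq Y \cap \Gamma$ with $|Y'| \le (d+1)(k-1)$ and $p \in \conv(Y')$. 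Because $\conv(Y') \subseteq \conv(Y)$, the nearest point of $\conv(Y')$ to the origin is at distance at least $|p|$; but $p \in \conv(Y')$ achieves distance $|p|$, and nearest points of convex sets are unique, so $p$ is again that nearest point. Thus $Y'$ induces the same half-space $H^*$ and hence the same $H_1,\dots,H_k$.

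It remains to transport $Y'$ back. Setting $A_i' = \{x \in A_i : S(x) \in Y'\}$, one checks that $S$ is injective on each $A_i$ and that the $Y_1,\dots,Y_k$ are pairwise disjoint: if $y \in Y_i \cap Y_j$ then $y v_i = \bar x$ and $y v_j = \bar{x'}$ for some $x \in A_i$, $x' \in A_j$, and comparing last coordinates forces $v_i\cdot v_j = 1$, impossible for distinct vertices of the regular simplex (where $v_i\cdot v_j = -1/(k-1)$). Consequently $\bigcup_i S(A_i') = Y'$, so $|A_1'| + \dots + |A_k'| = |Y'| \le (d+1)(k-1)$. Finally $0 \notin \conv(Y') = \conv\big(\bigcup_i S(A_i')\big)$, and the same tensor computation as in \cref{lemma:sarkaria} shows $\bigcap_i \conv(A_i') = \emptyset$, so the (simple TSVM) of $A_1',\dots,A_k'$ is defined; it is constructed from exactly the point set $Y'$, hence produces the same closest point $p$ and the same half-spaces $H_1,\dots,H_k$ as the original.

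The main obstacle is squeezing the bound down to $(d+1)(k-1)$ rather than the $(d+1)(k-1)+1$ that a direct application of Carath\'eodory to $\conv(Y)$ would give; the extra saving of one comes precisely from the observation that the point of $\conv(Y)$ closest to the origin lies on the proper face $F \subseteq \Gamma$, so the Carath\'eodory step takes place in a space of dimension $(d+1)(k-1)-1$. A secondary technical point, handled by uniqueness of metric projections onto convex sets, is verifying that restricting to $Y'$ does not perturb the maximal-distance half-space $H$.
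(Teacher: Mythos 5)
Your proposal is correct and follows essentially the same route as the paper: both locate the closest point $p$ of $\conv(Y)$ to the origin on a proper face of dimension at most $(d+1)(k-1)-1$ and apply Carath\'eodory's theorem there to extract at most $(d+1)(k-1)$ points of $Y$ whose preimages form the desired subsets. Your write-up simply makes explicit several details the paper leaves implicit (the identification of $H$ with the half-space determined by $p$, the uniqueness of the metric projection, and the disjointness of the $Y_i$).
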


\begin{proof}
We follow the construction in \cref{def:simpleTSVM}. Since $0 \not\in \conv(Y)$ the closest point to the origin in $\conv(Y)$ must be in a face of the polytope $\conv(Y)$.  This face $K$ can have dimension at most $(d+1)(k-1)-1$.  By Carath\'eodory's theorem, we can choose a set of at most $(d+1)(k-1)$ point in $Y \cap K$ whose convex hull contains $p$.  The subsets of $A_1,\ldots, A_k$ that induced this subset in $Y \cap K$ satisfy the condition we wanted.
\end{proof}

\begin{theorem}\label{thm:TSVM-support}
Let $A_1, \ldots, A_k$ be $k$ finite sets in $\rr^d$ such that $\bigcap_{i=1}^k\conv A_i = \emptyset$.  We can find subsets $A_1' \subset A_1, \ldots, A_k'\subset A_k$ such that $A_1', \ldots, A'_k$ induces the same (TSVM) as $A_1, \ldots, A_k$ and such that $|A_1'|+\ldots+|A_k'|\le (d+1)(k-1)$
\end{theorem}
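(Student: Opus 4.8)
The plan is to mimic the argument for \cref{thm:simple-TSVM-support}, but the extra constraint is that we must reproduce the point $p$ as the closest point of $R\cap\conv(Y)$ to the origin, not merely the closest point of $\conv(Y)$. So the first step is to recall the construction in \cref{def:TSVM}: we have $Y = \bigcup_i Y_i \subset \rr^{(d+1)(k-1)}$, the convex set $P = R\cap\conv(Y)$, the point $p$ closest to the origin in $P$, and the separating half-space $H$ whose boundary hyperplane passes through $p$. The output half-spaces are $H_i = f_i(U_i\cap H)$. To obtain the same (TSVM) from subsets $A_i'\subset A_i$, it suffices to find subsets so that the corresponding $Y'\subset Y$ has $R\cap\conv(Y')$ with the same closest point $p$ to the origin, since then the same $H$, and hence the same $H_i$, is produced. (I should check that the definition is well-posed — that $p$ together with the requirement "contains $Y$, maximal distance from origin" pins down $H$ uniquely — but this is part of the setup, not of this theorem.)

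The second step is a Carathéodory-type argument relative to the affine subspace $R$. Write $p = \sum_{x\in A} \alpha(x) S(x)$ as a convex combination of points of $Y$ that lies in $R$; the condition $p\in R$ is a system of $k-1$ linear equations (the last row of the $(d+1)\times(k-1)$ matrix $p$ vanishes). Together with $\sum_x \alpha(x) = 1$ this is $k$ affine constraints on the weight vector $(\alpha(x))_{x\in A}$. A standard linear-algebra pruning (à la Carathéodory, applied to the system "$p$ is a convex combination of the $S(x)$ subject to lying in $R$") shows that $p$ can be written using at most $k$ of the points of $Y$. Actually, a sharper accounting is needed to reach the bound $(d+1)(k-1)$: I would instead locate $p$ on a face of $P = R\cap\conv(Y)$. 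Since $0\notin P$ (by \cref{lemma:sarkaria}, $0\notin\conv(Y)\supseteq P$), the nearest point $p$ lies on a proper face of the polytope $P$, and $P$ lives in the affine space $R$ translated appropriately — but $p\in R$ and $\dim R = d(k-1)$, so $P$ has dimension at most $d(k-1)$ and its proper faces have dimension at most $d(k-1)-1$. Hmm, that gives $d(k-1)$ points, not $(d+1)(k-1)$; so in fact the face-of-$P$ approach gives a bound at least as good as claimed. The cleanest route: let $F$ be the smallest face of $\conv(Y)$ whose relative interior meets the face of $P$ containing $p$ in its relative interior; then $p$ is a convex combination of the vertices of $F\cap$(that flat), and $\dim\le d(k-1)-1 < (d+1)(k-1)$, so at most $(d+1)(k-1)$ vertices of $Y$ suffice. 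I then take $A_i'$ to be the points of $A_i$ whose images $S(x)$ appear among these chosen vertices.

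The third step is to verify that the pruned configuration genuinely induces the \emph{same} (TSVM), i.e.\ that for $Y' = \bigcup_i S(A_i')$ the closest point of $R\cap\conv(Y')$ to the origin is still $p$. One inclusion is immediate: $R\cap\conv(Y')\subseteq R\cap\conv(Y) = P$, and $p\in\conv(Y')\cap R$ by construction, so $p$ is a point of the smaller set at distance $\dist(0,P)$ from the origin; since the smaller set is contained in $P$, no point of it is closer, hence $p$ is the closest point of $R\cap\conv(Y')$ as well. Then $H$ and the $H_i$ are unchanged. I expect the main obstacle to be precisely this bookkeeping around dimensions — making sure the relevant face is cut out correctly inside the flat $R$ (which is itself a translate-free subspace, so $P$ sits in the \emph{linear} subspace $R$), and confirming that the bound $(d+1)(k-1)$ is what the argument yields (it appears the argument actually gives the smaller $d(k-1)$, which is still consistent with the stated bound). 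A secondary point to be careful about: one should also retain at least one point in each $A_i'$ — if some $A_i'$ were empty the $k$-tuple would be degenerate — but since $p\in R$ forces, via \cref{lem:barycenter}-type reasoning, the $Y_i$-components to be "balanced", every class must contribute at least one chosen vertex, so this is automatic.
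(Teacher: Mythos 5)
Your overall strategy --- prune $Y$ by a Carath\'eodory argument so that $p$ is still realized, then observe that $R\cap\conv(Y')\subseteq R\cap\conv(Y)$ forces $p$ to remain the closest point to the origin --- is exactly the paper's argument, and your step 3 (together with the observation that each $A_i'$ must be nonempty because $p\in R$ forces the class weights to be equal) is if anything more carefully spelled out than the paper's version. The problem is in step 2, where the Carath\'eodory step is actually carried out: you oscillate between faces of $P=R\cap\conv(Y)$ and faces of $\conv(Y)$, and the versions based on $P$ do not work. The vertices of $P$, and more generally of any face of $P$ or of a slice $F\cap R$, are \emph{not} points of $Y$ --- by \cref{lem:barycenter} they are barycenters of $k$-tuples with one point in each $Y_i$. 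So writing $p$ as a convex combination of ``the vertices of $F\cap(\text{that flat})$'' does not produce a subset of $Y$, and hence does not produce subsets $A_i'\subset A_i$; the claimed bounds of ``$k$ points'' and ``$d(k-1)$ points'' are artifacts of this conflation.

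The fix, which is what the paper does, is to work with faces of $\conv(Y)$ itself. Let $K$ be the minimal face of $\conv(Y)$ containing $p$. This face is proper: if $p$ were interior to $\conv(Y)$, then a neighborhood of $p$ inside the linear subspace $R$ (which contains both $p$ and the origin) would lie in $R\cap\conv(Y)$, and moving from $p$ toward the origin would contradict the minimality of $\|p\|$; note $p\neq 0$ since $0\notin\conv(Y)$ by \cref{lemma:sarkaria}. Hence $\dim K\le (d+1)(k-1)-1$, and Carath\'eodory applied inside $K$ writes $p$ as a convex combination of at most $(d+1)(k-1)$ points of $Y\cap K$ --- genuine points of $Y$, whose preimages give the $A_i'$. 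With this replacement your step 3 goes through verbatim and the bound $(d+1)(k-1)$ is exactly what the argument yields. (Your parenthetical worry about whether $p$ and $Y$ determine $H$ uniquely, and whether shrinking $Y$ to $Y'$ could enlarge the feasible set of hyperplanes through $p$, is a legitimate point about the well-posedness of \cref{def:TSVM}; the paper glosses over it as well, so I do not count it against you here.)
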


\begin{proof}
The proof is similar to the previous theorem.  If we look for the minimal face of $\conv Y$ sustaining $p$, it has dimension at most $(k-1)(d+1)-1$, so the same application of Carath\'eodory's theorem yields the result.  The only additional detail to check is that $R \cap \conv(Y) \neq \emptyset$.  This holds because for every choice $x_1 \in A_1, \ldots, x_k \in A_k$, the baryceneter of the point $S(x_1),\ldots, S(x_k)$ is in $\conv(Y) \cap R$. 
\end{proof}

We denote the subsets obtained by Theorem \ref{thm:simple-TSVM-support} and Theorem \ref{thm:TSVM-support} as the \textit{support vectors} of a (simple TSVM) or (TSVM), respectively.

As mentioned above, to compute (Simple TSVM) we need to compute an SVM in a $(k-1)(d+1)$-dimensional space with $|A_1|+\ldots + |A_k|+1$ points.  A direct approach to compute (TSVM) would be to first find the vertices of $\conv(Y)\cap R$ and solve the induced SVM.  We know $R$ is a linear subspace of co-dimension $k-1$, so the vertices of $\conv(Y) \cap R$ should be the intersection of the $(k-1)$-skeleton of $\conv(Y)$ with $R$.  Due to Lemma \ref{lem:barycenter}, this is a subset of the barycenters of $k$-tuples with one element in each $Y_i$.  Therefore, we can compute these barycenters and then compute an SVM in $R$.  This leads us to solve an SVM in a $(k-1)d$-dimensional space with $|A_1|\cdot \ldots \cdot |A_k|+1$ points.

Theorem \ref{thm:TSVM-support} shows that computing a TSVM can be treated as a linear programming type problem, as in the framework of Sharir and Welzl \cites{Sharir:1992ih}.  This is a randomized approach to problems which are combinatorially similar to linear programming problems, so that they can be solved in expected linear time in the input, which is a signficant reduction over brute-force approaches.  This means that for fixed $k, d$ we can compute (TSVM) with a randomized algorithm in expected time linear in $|A_1|+\ldots +|A_k|$. We describe the process in Algorithm \ref{alg:euclid}, before translating back to $\rr^d$.

The key idea to compute this is to order the points randomly.  At any point, we have computed (TSVM) for the first $t-1$ points and we keept track of the support vector of this TSVM.  When including the $t$-th point, if we don't need to adjust the current halfspace $H$ generated by (TSVM), we keep goint.  Otherwise, we adjust our guess for the support vectors and run the algorithm again for the first $t$ points.  The computations of Sharir and Welzl bound the expected number of times we need to rerun this procedure, and end up with an expected running time linear on the input.  For deeper explanations, we recommend references on linear-programming type algorithms and violator spaces \cites{Gartner:2008bp, Amenta:2017ed}.

\begin{algorithm}
\caption{Computing TSVM}\label{alg:euclid}
\begin{algorithmic}[1]
\Procedure{TSVM}{Family $Y$, Tuple $Y'$} 
\State Order $Y$ randomly as $p_1, \ldots, p_{|Y|}$ where the first $|Y'|$ elements are $Y'$.  The tuple $Y'$ must have $(k-1)(d+1)$ points, and are the candidates for the support vectors of the TSVM.
\State Find the TSVM for $Y'$, denoted $H$.  This is a half-space in $\rr^{(d+1)(k-1)}$ that does not contain the origin.
\For{each $p_t \in Y$}
    \State Check $p_t \in H$.
    \If{$p_t \not \in H$}
        \State Find the TSVM $H'$ for $Y' \cup \{p_t\}$. 
        \State Let $Y''$ be the $(d+1)(k-1)$-tuple whose TSVM is $H'$.
        \State $H$ = TSVM($\{ p_1, \hdots, p_t \}, Y'')$
        \EndIf
    \EndFor
\State \Return $H$
\EndProcedure
\end{algorithmic}
\end{algorithm}

The model (TSVM) generalizes largest-margin SVMs when $k=2$.  This is the main motivation to use the subspace $R$ in the computation.  Let us prove that this is indeed the case.

\begin{theorem}
For $k=2$, the multiclass SVM (TSVM) gives the two support hyperplanes of the largest-margin SVM of $A_1$ and $A_2$.
\end{theorem}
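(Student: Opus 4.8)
The plan is to unwind \cref{def:TSVM} in the case $k=2$ and identify each ingredient with a piece of the nearest-point description of the largest-margin SVM of $A_1,A_2$. For $k=2$ the regular simplex in $\rr^{k-1}=\rr$ is $\{v_1,v_2\}=\{1,-1\}$, so $\rr^{(d+1)(k-1)}=\rr^{d+1}$, $S(x)=\bar x$ for $x\in A_1$ and $S(x)=-\bar x$ for $x\in A_2$; hence $Y=\{\bar a:a\in A_1\}\cup\{-\bar b:b\in A_2\}$ and $R=\{z\in\rr^{d+1}:z_{d+1}=0\}$. A point of $\conv(Y)$ is $\sum_a\alpha_a\bar a-\sum_b\beta_b\bar b$ with nonnegative weights of total $1$, and it lies in $R$ exactly when $\sum\alpha_a=\sum\beta_b=\tfrac12$; then its first $d$ coordinates form $\tfrac12(a'-b')$ with $a'\in\conv A_1$, $b'\in\conv A_2$, and every such pair occurs. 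Thus $P=R\cap\conv(Y)=\{(\tfrac12(a'-b'),0):a'\in\conv A_1,\ b'\in\conv A_2\}$, and minimizing $\|(\tfrac12(a'-b'),0)\|=\tfrac12\|a'-b'\|$ over this set is precisely the SVM nearest-point problem. Writing $(a^*,b^*)$ for the closest pair, $w=(a^*-b^*)/\|a^*-b^*\|$, and $\varepsilon=\tfrac12\|a^*-b^*\|$ (the margin; $\varepsilon>0$ since $\conv A_1\cap\conv A_2=\emptyset$), we get $p=(\varepsilon w,0)$ with $\|p\|=\varepsilon$.

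Next I pin down the half-space. I claim the object produced by \cref{def:TSVM} is $H=\{z\in\rr^{d+1}:\langle z,(w,s)\rangle\ge\varepsilon\}$, where $(w,s)$ denotes the vector of $\rr^{d+1}$ with first $d$ coordinates $w$ and last coordinate $s:=\varepsilon-\langle a^*,w\rangle$. Its boundary contains $p$ because $\langle(\varepsilon w,0),(w,s)\rangle=\varepsilon$. It contains $Y$: the standard variational inequality characterizing $a^*$ gives $\langle a,w\rangle\ge\langle a^*,w\rangle$ for $a\in\conv A_1$, hence $\langle\bar a,(w,s)\rangle=\langle a,w\rangle+s\ge\varepsilon$; similarly $\langle b,w\rangle\le\langle b^*,w\rangle$ for $b\in\conv A_2$, and together with $\langle a^*-b^*,w\rangle=2\varepsilon$ this gives $\langle-\bar b,(w,s)\rangle=-\langle b,w\rangle-s\ge-\langle b^*,w\rangle-s=\varepsilon$. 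Also $0\notin H$ because $\varepsilon>0$. It remains to see this $H$ is the one singled out by the maximality clause: for any half-space through $p$ containing $Y$, the set $\partial H\cap R$ is a genuine hyperplane of $R$ (not all of $R$, since $Y$ has points on both sides of $R$) with $P$ on one side, so its distance to the origin inside $R$ is at most $\|p\|$, with equality exactly when the $\rr^d$-component of the normal is a positive multiple of $w$; and once that component equals $w$, the containments $A_1\subset H$ and $A_2\subset H$ squeeze the last coordinate of the normal to $\varepsilon-\langle a^*,w\rangle$ from both sides.

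Finally I translate $H$ back to $\rr^d$. Since $v_1=1$, $U_1=\{\bar x:x\in\rr^d\}$ and $f_1=\Pi$, so $H_1=f_1(U_1\cap H)=\{x\in\rr^d:\langle x,w\rangle+s\ge\varepsilon\}=\{x:\langle x,w\rangle\ge\langle a^*,w\rangle\}$. Since $v_2=-1$, $U_2=\{-\bar x:x\in\rr^d\}$ and $f_2(-\bar x)=\Pi(\bar x)=x$, so $H_2=\{x\in\rr^d:-\langle x,w\rangle-s\ge\varepsilon\}=\{x:\langle x,w\rangle\le\langle b^*,w\rangle\}$, using $-s-\varepsilon=\langle a^*,w\rangle-2\varepsilon=\langle b^*,w\rangle$. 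Hence the boundary hyperplanes of $H_1$ and $H_2$ are $\{x:\langle x,w\rangle=\langle a^*,w\rangle\}$ and $\{x:\langle x,w\rangle=\langle b^*,w\rangle\}$, the two hyperplanes orthogonal to $w$ through $a^*$ and $b^*$ — i.e. the two support hyperplanes of the largest-margin SVM, at distance $\varepsilon$ on either side of its separating hyperplane $\{x:\langle x,w\rangle=\langle\tfrac12(a^*+b^*),w\rangle\}$. This proves the theorem.

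The coordinate bookkeeping and the SVM nearest-point inequalities are routine. The delicate step, where I expect the main difficulty, is the middle one: verifying that the half-space with normal $(w,\varepsilon-\langle a^*,w\rangle)$ is exactly the one distinguished by \cref{def:TSVM}. Concretely one must show that ``boundary through $p$'' together with ``maximal distance from the origin'' forces the $\rr^d$-part of the normal to point along $p$ (equivalently, makes $p$ the foot of the perpendicular from the origin to $\partial H\cap R$), after which ``$H\supseteq Y$'' has a unique solution. I would isolate this as a short lemma: the closed half-space containing $Y$ whose boundary meets $R$ in the hyperplane of $R$ through $p$ orthogonal to the segment from the origin to $p$ is unique, its existence being guaranteed by $\bigcap_i\conv A_i=\emptyset$.
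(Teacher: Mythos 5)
Your reduction of the TSVM objective to the classical nearest-point formulation of the max-margin problem is correct and genuinely different from the paper's argument: you compute $R\cap\conv(Y)=\tfrac12\left(\conv A_1-\conv A_2\right)\times\{0\}$ explicitly and identify $p$ with the closest pair $(a^*,b^*)$, whereas the paper argues pictorially by embedding $A_1$ in $U_1$, reflecting $A_2$ into $U_2$, and comparing translates of hyperplanes. Your verification that the half-space with normal $(w,s)$, $s=\varepsilon-\langle a^*,w\rangle$, contains $Y$, has $p$ on its boundary, excludes the origin, and pulls back under $f_1,f_2$ to the two support half-spaces is also correct, and is more precise than the corresponding steps in the paper.

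The gap is exactly where you flagged it, and it is real. \cref{def:TSVM} maximizes the distance from the origin of $\rr^{d+1}$ to $H$, which for $H=\{z:\langle z,u\rangle\ge c\}$ with $c=\langle p,u\rangle$ equals $\langle p,u\rangle/\|u\|$; your argument instead bounds the distance from the origin to $\partial H\cap R$ \emph{inside} $R$, which is $\langle p,u\rangle/\|u_1\|$ with $u_1$ the $\rr^d$-component of $u$. The latter is maximized exactly when $u_1\parallel w$, as you say, but the former is not: since $\|u\|\ge\|u_1\|$, tilting $u$ out of $R$ can trade a small loss in $\langle p,u\rangle$ for a larger gain in $1/\|u\|$. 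Concretely, take $d=2$, $A_1=\{(2,2)\}$, $A_2=\{(0,2)\}$, so $Y=\{(2,2,1),(0,-2,-1)\}$, $p=(1,0,0)$, $w=e_1$, $\varepsilon=1$, $s=-1$. Every admissible normal must be orthogonal to $(1,2,1)$; your candidate $u=(1,0,-1)$ gives ambient distance $1/\sqrt2$, while the admissible $u=(5,-2,-1)$ gives $5/\sqrt{30}>1/\sqrt2$ and is in fact the maximizer. That choice produces $H_1,H_2$ with boundaries $5x_1-2x_2=6$ and $5x_1-2x_2=-4$ rather than the support hyperplanes $x_1=2$ and $x_1=0$. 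So under the literal reading of the maximality clause your identification of $H$ (and with it the theorem) fails whenever the supporting half-space at $p$ is not unique; the proof goes through only if the clause is read as maximizing the distance measured inside $R$, i.e.\ the margin of the induced separation within $R$. You should either prove the two maximizers coincide (they do not in general) or state and use the $R$-internal reading as an explicit lemma. For what it is worth, the paper's own proof makes the same silent identification --- it asserts that $\|p\|$ equals the distance from the origin to $H$, which presupposes that the normal of $H$ is parallel to $p$ --- so your explicit computation exposes an ambiguity in the definition rather than merely a hole in your own argument.
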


\begin{proof}
Notice that $k=2$ is the only case when $U_i = R_i$ for all values of $i$.  Additionally, each $U_i$ is a translate of $R$.  In this case we also have $v_1 = 1, v_2 = -1$ in $\rr^1$.  Therefore $R_1 = \{\bar{x} \in \rr^{d+1}: x \in \rr^d\}$ and $R_2 = \{-\bar{x} \in \rr^{d+1}: x \in \rr^d\}$.  Let $p$ be the closest point of $\conv(Y)\cap R$ to the origin, and $H$ be the affine hyperplane in $\rr^{d+1}$ through $p$ from \cref{def:TSVM}.  Let $H'$ be the hyperplane through the origin of $\rr^{d+1}$, parallel to $H$, and let $H'' = H \cap U_1$.  
Notice that $\|p\|$ is the distance between $H$ and $H'$.  Since a translate of $p$ lies in $U_1$, $H''+p$ contains the support vectors in of $A_1$ in $U_1$.  The same holds for $(H'\cap U_2)+p$ for the support vectors in $A_2$, so $H''-p$ contains the support vectors of $A_2$ in $U_1$.  This means that the (TSVM) induced by $A_1, A_2$ is an SVM at common distance $\|p\|$ from each side.

Similarly, given a separating hyperplane $\tilde{H}$ for $A_1, A_2$ at distance $\varepsilon$ from each set, we can embed $\rr^d$ in $U_1$ and then reflect the embedding of $A_2$ with respect to the origin in $\rr^{d+1}$ so that it lies in $U_2$.  If we extend $\tilde{H}$ through the origin in $\rr^{d+1}$, we have a hyperplane through the origin at distance $\varepsilon$ from the convex hull of the embedding of $A_1$ in $U_1$ and $A_2$ in $U_2$.  The largest margin SVM must therefore coincide with the one induced by (TSVM).
\end{proof}

An illustration of the ideas behind this proof is shown in \cref{fig:margin}.

\section{Subdivision of ambient space and potential classification errors}\label{sec:subdivision}

In each of \cref{def:simpleTSVM} and \cref{def:TSVM} we use a half-space $H$ in $\rr^{(d+1)(k-1)}$ that does not contain the origin to generate the corresponding half-space $H_1, \ldots, H_k$ in $\rr^d$.

In each case, we can introduce a half-space $H'$ that is a translate of $H$ and whose boundary contains the origin.  Notice that the half-spaces $H'_i = f_i(U_i \cap H')$ for $i=1,\ldots k$ have non-empty intersection but their interiors have an empty intersection.  This is a direct consequence of \cref{lemma:sarkaria} because $H'$ contains the origin and the interior of $H'$ does not.

As an illustration, for $k=2$ the two half-space $H'_1, H'_2$ from (TSVM) share their boundary, which is precisely the largest-margin SVM for $A_1, A_2$.

Let $T= \bigcap_{i=1}^{k} H'_k$ and $\Delta = \rr^d\setminus \left( \bigcup_{i=1}^k H_i^{\circ}\right)$, where $X^{\circ}$ denotes the interior of $X$ for any $X \subset \rr^d$.  Now for $i=1,\ldots, k$ we define the convex sets
$M_i = \{p + tq: p \in T, \ t \ge 0, \ q \in \Delta \cap H_i\}$.


\begin{figure}
\floatbox[{\capbeside\thisfloatsetup{capbesideposition={right,top},capbesidewidth=8cm}}]{figure}[\FBwidth]
{\caption{This figure shows the process to find (TSVM) for two sets of points.  First we embed the sets in $U_1$, then we reflect $A_2$ across the origin to obtain their representatives in $U_2$.  We take the convex hull of $Y_1$ and $Y_2$ and intersect it with $R$, which in the figure gives us a hexagon.  We take the closest point $p$ to the origin in $\conv(Y)\cap R$ and construct a hyperplane parallel to the facet containing $p$ of $\conv(Y)$ through the origin.  This hyperplane intersects $U_1$ in the largest-margin SVM for the original sets.}\label{fig:margin}}
{\includegraphics[width=0.4\textwidth]{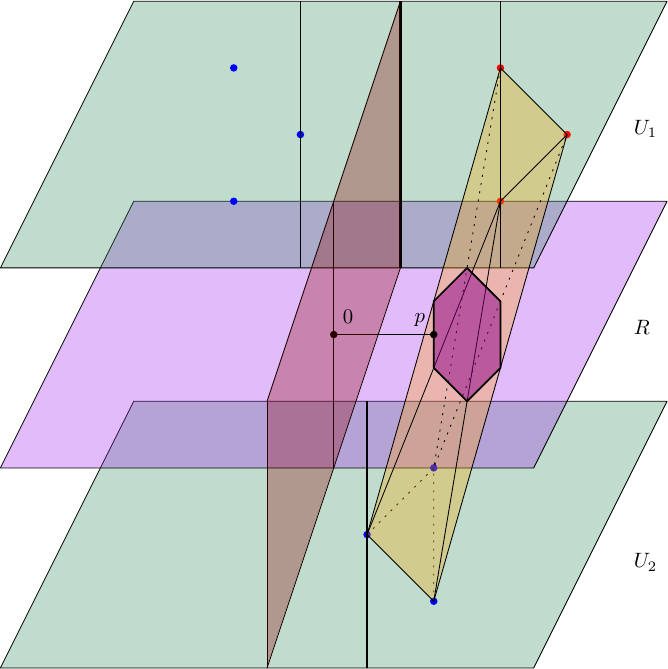}}
\end{figure}

Intuitively, $\Delta$ is the polytopal region not contained in the union of the $H_i$.  The set $T$ is an affine subspace inside $\Delta$.  If $k \le d+1$, we the set $\Delta$ is constructed by making a simplex in the orthogonal complement of $T$ and extending it in the directions of $T$.  The set $M_i$ is formed by taking all possible rays that start at $T$ and go in the direction of a point of $H_i$ in the boundary of $\Delta$.  The case when $\Delta$ is a simplex is perhaps the most illustrative one, since in this case $T$ is a point and we simply take the cones from $T$ towards each of the facets of $\Delta$.  This case looks like \cref{fig:example-full} (3).

As mentioned before, the condition needed to generate (TSVM) or (simple TSVM) is that the convex hulls of the sets $A_i$ do not all overlap.  If the convex hulls of fewer of these sets overalp, any model that subdivides $\rr^d$ into convex pieces is bound to miss-label some data.  We minimize the misslabelings with our constructions.  
 
\section{Equivariance}\label{sec:equivariance}

In this section we describe how the multi-class SVMs we introduced interact with transformations of the set of points.  It is clear that if we apply the same affine transformation to the sets of points $A_1, \ldots, A_r$ and the half-spaces $H_1, \ldots, H_r$ the containments are preserved, but we are interested to see if the algorithms to obtain $H_1, \ldots, H_r$ behave as expected with these transformations.

\begin{theorem}\label{thm:equivariance-simple}
Let $M$ be an orthogonal linear transformation of $\rr^d$.  Let $H_1, \ldots, H_r$ be the (simple TSVM) induced by $A_1, \ldots, A_r$.  Then $(M H_1, \ldots, MH_r)$ is the (simple TSVM) induced by $MA_1, \ldots, MA_r$.
\end{theorem}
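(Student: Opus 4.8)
The plan is to realize the orthogonal transformation $M$ of $\rr^d$ as an orthogonal transformation of the ambient space $\rr^{(d+1)(k-1)}$ under which the whole Sarkaria construction of \cref{def:simpleTSVM} is equivariant, and then simply chase the definition. First I would lift $M$ to $\bar M = \begin{bmatrix} M & 0 \\ 0 & 1 \end{bmatrix}$ acting on $\rr^{d+1}$: since $M$ is orthogonal and $\bar M$ leaves the appended coordinate fixed, $\bar M$ is orthogonal, and $\overline{Mx} = \bar M \bar x$ for every $x \in \rr^d$. Tensoring with the identity of $\rr^{k-1}$ gives $g = \bar M \otimes I_{k-1}$ on $\rr^{(d+1)(k-1)}$; a tensor product of orthogonal maps is orthogonal, $g$ fixes the origin, and, reading $\rr^{(d+1)(k-1)}$ as the space of $(d+1)\times(k-1)$ matrices, $g$ acts by $S \mapsto \bar M S$.

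Next I would record how $g$ interacts with the three ingredients of \cref{def:simpleTSVM}. For the lifted data, $S(Mx) = \overline{Mx}\, v_i^T = \bar M \bar x\, v_i^T = g(S(x))$, so if $Y$ and $Y'$ denote the point sets built from $A_1,\dots,A_k$ and from $MA_1,\dots,MA_k$ respectively, then $Y' = g(Y)$. For the half-space, one checks that the nearest point $p$ of $\conv(Y)$ to the origin is unique and that the maximal-distance half-space of \cref{def:simpleTSVM} is the unique closed half-space $H = \{z : \langle z, p\rangle \ge \|p\|^2\}$ supported at $p$; since $g$ is an isometry fixing the origin it sends $\conv(Y)$ to $\conv(Y')$, the nearest point $p$ to $g(p)$, and hence $H$ to the analogous half-space $H' = g(H)$. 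For the coordinate maps, I would verify $g(U_i) = U_i$ — because $g(\bar x v_i^T) = \overline{Mx}\, v_i^T$ and $x \mapsto Mx$ is a bijection of $\rr^d$ — and $f_i \circ g = M \circ f_i$, since $f_i(g(y)) = \Pi\bigl((\bar M y) v_i\bigr) = \Pi\bigl(\bar M (y v_i)\bigr) = M\,\Pi(y v_i) = M f_i(y)$, using the block form of $\bar M$ and that $\Pi$ deletes precisely the coordinate that $\bar M$ fixes.

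Assembling these, and using that $g$ is a bijection,
\[
H_i' = f_i(U_i \cap H') = f_i\bigl(g(U_i) \cap g(H)\bigr) = f_i\bigl(g(U_i \cap H)\bigr) = M f_i(U_i \cap H) = M H_i,
\]
which is the assertion. I do not anticipate a genuine obstacle; the only step that deserves care is the second one, namely making precise that the half-space containing $Y$ of maximal distance from the origin is well defined and is the one supported at the nearest point $p$ of $\conv(Y)$ — the standard closest-point duality for a largest-margin separator of a compact set from a point — after which the equivariance reduces to the trivial fact that an isometry fixing the origin carries nearest points to nearest points.
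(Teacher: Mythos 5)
Your proposal is correct and follows essentially the same route as the paper: both lift $M$ to the block-diagonal orthogonal map $\bar M$ (the paper's $M_2$) on $\rr^{d+1}$ and then to the column-wise action $g=\bar M\otimes I_{k-1}$ (the paper's $M_3$) on $\rr^{(d+1)(k-1)}$, observe that this is an origin-fixing isometry, and push \cref{def:simpleTSVM} through it. If anything, your write-up is more complete than the paper's, which stops after verifying orthogonality of $M_3$ and leaves implicit the identities $Y'=g(Y)$, $g(U_i)=U_i$, $f_i\circ g=M\circ f_i$, and the uniqueness of the maximal-distance half-space supported at the nearest point of $\conv(Y)$, all of which you check explicitly.
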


\begin{proof}
First notice that $M$ can be extended to $\rr^{d+1}$ by acting on the first $d$ coordinates and leaving the last coordinate fixed.  This is also an orthogonal transformation.  We denote this transformation by $M_2$, so $\overline{Mx} = M_2\bar{x}$.  Finally, we denote by $M_3$ the transformation on $\rr^{(d+1)(k-1)}$ that multiplies every column of a $(d+1)\times(k-1)$ matrix by $M_2$, so $y \mapsto M_2y$ as a product of matrices.

This last transformation is also orthogonal.  To see this, we first show that it preserves the dot product between vectors in $U_i$ and $U_j$ for any (possibly equal) $i$ and $j$.  We use a known factorization for the dot product of tensor products, as shown below.

\begin{align*}
\langle \bar{x} \otimes v_i, \bar{y} \otimes v_j\rangle & = \langle \bar{x}, \bar{y} \rangle \langle v_i, v_j \rangle = \langle M_2 \bar{x}, M_2 \bar{y} \rangle \langle v_i, v_j \rangle = \\ & = \langle (M_2 \bar{x}) \otimes v_i, (M_2 \bar{y}) \otimes v_j\rangle = \langle M_3 (\bar{x} \otimes v_i) , M_3 (\bar{y} \otimes v_j) \rangle    
\end{align*}

Consider the union of an affine basis for each of $U_1, \ldots, U_{k-1}$.  This set of $(d+1)(k-1)$ vectors forms a basis of $\rr^{(d+1)(k-1)}$, and $M_3$ preserves the dot product between any two of these vectors.  Therefore $M_3$ preserves the dot product in $\rr^{(d+1)(k-1)}$ and is therefore orthogonal.

\end{proof}

\begin{theorem}
Let $M$ be an orthogonal linear transformation of $\rr^d$ and $H_1, \dots, H_k$ be the (TSVM) induced by $A_1, \ldots, A_k$.  Then $(M H_1, \ldots, MH_k)$ is the (TSVM) induced by $MA_1, \ldots, MA_k$.
\end{theorem}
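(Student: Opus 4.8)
The plan is to mirror the proof of \cref{thm:equivariance-simple} and then account for the one extra ingredient, the subspace $R$. As there, extend $M$ first to the orthogonal transformation $M_2 = \begin{bmatrix} M & 0 \\ 0 & 1 \end{bmatrix}$ of $\rr^{d+1}$, and then to the transformation $M_3$ of $\rr^{(d+1)(k-1)}$ sending a $(d+1)\times(k-1)$ matrix $y$ to $M_2 y$; by the proof of \cref{thm:equivariance-simple}, $M_3$ is orthogonal. I would then record three intertwining identities. First, for $x\in A_i$ we have $M_3(S(x)) = M_2\bar x\, v_i^T = \overline{Mx}\, v_i^T = S(Mx)$, so $M_3$ carries the point family $Y$ built from $A_1,\dots,A_k$ bijectively onto the point family $Y'$ built from $MA_1,\dots,MA_k$, hence $M_3(\conv(Y)) = \conv(Y')$. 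Second, and this is the only genuinely new point compared with the simple case, $M_3(R)=R$: the last row of $M_2 y$ equals the last row of $M_2$ times $y$, and the last row of $M_2$ is $e_{d+1}^{T}$, so the last row of $M_2 y$ is just the last row of $y$; the same computation gives $M_3(R_i)=R_i$ and $M_3(U_i)=U_i$ (the latter also directly from $M_3(\bar x\otimes v_i)=\overline{Mx}\otimes v_i$). Third, for every $z\in\rr^{d+1}$ one has $\Pi(M_2 z)=M\Pi(z)$, so $f_i(M_3 y) = \Pi(M_2(y v_i)) = M\Pi(y v_i) = M f_i(y)$, i.e.\ $f_i\circ M_3 = M\circ f_i$.

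With these in hand I would run through \cref{def:TSVM} step by step for the two input data. The set $P = R\cap\conv(Y)$ is mapped by the bijection $M_3$ onto $R\cap\conv(Y') = P'$; it is nonempty (it contains the barycenters of transversals, as in the proof of \cref{thm:TSVM-support}) and, by \cref{lemma:sarkaria}, $0\notin\conv(Y)\supseteq P$. Since $M_3$ is orthogonal it preserves distance to the origin and preserves closed convex sets, so it sends the unique nearest point $p$ of $P$ to the origin to the unique nearest point $p'$ of $P'$; note $p\neq 0$ because $p\in\conv(Y)$. The half-space $H$ selected by \cref{def:TSVM} is the one that contains $Y$, has $p$ on its boundary hyperplane, and has maximal distance to the origin — a description built entirely from the metric and linear structure that $M_3$ preserves, together with $Y\mapsto Y'$ and $p\mapsto p'$ — so $M_3(H)$ is exactly the half-space $H'$ that \cref{def:TSVM} assigns to $MA_1,\dots,MA_k$. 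Finally, using $M_3(U_i)=U_i$, $M_3(H)=H'$, and $f_i\circ M_3=M\circ f_i$,
\[
H_i' \;=\; f_i\!\bigl(U_i \cap H'\bigr) \;=\; f_i\!\bigl(M_3(U_i\cap H)\bigr) \;=\; M\,f_i\!\bigl(U_i\cap H\bigr) \;=\; M H_i ,
\]
which is the assertion.

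The proof is thus essentially routine bookkeeping once \cref{thm:equivariance-simple} is available; the only point deserving care is in the middle paragraph, where I implicitly use that the half-space $H$ in \cref{def:TSVM} is uniquely determined by the pair $(Y,p)$ — this is precisely what makes ``the (TSVM) induced by $MA_1,\dots,MA_k$'' unambiguous in the first place, and granting it (as the definition does) the equivariance follows immediately, since the set of half-spaces maximizing distance to the origin subject to containing $Y$ and passing through $p$ is $M_3$-equivariant. Everything else — the orthogonality of $M_3$, the invariance of $R$, and the three intertwining identities — is elementary, so I would not expect any serious obstacle beyond assembling these pieces in the right order.
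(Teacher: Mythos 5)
Your proposal is correct and follows essentially the same route as the paper: extend $M$ to the orthogonal map $M_3$ from the proof of \cref{thm:equivariance-simple}, observe that $M_3$ fixes $R$ (and each $U_i$), and use the intertwining identity $f_i(U_i\cap M_3H)=Mf_i(U_i\cap H)$ to push the extremal half-space back down to $\rr^d$. You simply spell out more of the bookkeeping (the fate of $P$, $p$, and the uniqueness of $H$) than the paper's terse version does.
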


\begin{proof}
We follow the ideas used in the proof of \cref{thm:equivariance-simple}.  We notice that $M_3$ fixes $R$.  Therefore, the restriction of $M_3$ to $R$ is an orthogonal transformation.  This means that for any half-space $H$ in $\rr^{(d+1)(k-1)}$, we have $(M_3H) \cap R = M_3(H\cap R)$.  Again, if $H$ is the half-space in $\rr^{(d+1)(k-1)}$ that induces our (TSVM), we have that $M_3H$ is the half-space for the new set of points. 

Now, if we consider the (TSVM) induced by $A_1, \ldots, A_k$, we have to find the half-space $H$ in $\rr^{(d+1)(k-1)}$ farthest from the origin that contains $Y$.  Clearly, $M_3H$ is the farthest half-space from the origin that contains $M_3Y$.  For $i=1,\ldots, k$, we also have $f_i(U_i \cap (M_3 H)) = M f_i(U_i \cap H)$.
\end{proof}

\begin{theorem}
Let $q$ be a vector in $\rr^d$.  Let $X$ be the set of support vectors of the (TSVM) induced by $A_1, \ldots, A_k$.  Then $X+q$ is the set of support vectors of the (TSVM) induced by $A_1+q, \ldots, A_k+q$.
\end{theorem}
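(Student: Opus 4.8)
The plan is to exhibit a single linear automorphism $\Phi$ of $\rr^{(d+1)(k-1)}$ with two properties: it restricts to the identity on the subspace $R$, and it carries the lift of $A_1,\ldots,A_k$ to the lift of $A_1+q,\ldots,A_k+q$, sending $S(x)$ to $S(x+q)$ for each $x\in A_i$. Every ingredient of the construction in \cref{def:TSVM} is assembled from exactly these data: one forms $\conv Y$, intersects with $R$ to obtain $P$, takes the point $p\in P$ closest to the origin, and (by \cref{thm:TSVM-support}) reads off the support vectors from the minimal face of $\conv Y$ containing $p$. So once such a $\Phi$ is available the whole construction transforms compatibly with it, and the statement follows.

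To build $\Phi$ I would view $\rr^{(d+1)(k-1)}$ as the space of $(d+1)\times(k-1)$ matrices as in \cref{sec:tools}, let $E$ be the $(d+1)\times(d+1)$ matrix whose last column equals $\begin{bmatrix}q\\0\end{bmatrix}$ and whose other entries vanish, and set $\Phi(z)=(I+E)z$. Then $E^2=0$, so $\Phi$ is invertible with $\Phi^{-1}(z)=(I-E)z$; and since $E$ annihilates every vector with vanishing last coordinate, $Ez=0$ whenever $z\in R$, so $\Phi$ and $\Phi^{-1}$ fix $R$ pointwise. Moreover $(I+E)\bar x=\bar x+\begin{bmatrix}q\\0\end{bmatrix}=\overline{x+q}$, whence $\Phi(S(x))=(I+E)\bar x\,v_i^T=\overline{x+q}\,v_i^T=S(x+q)$ for every $x\in A_i$ and every $i$. (One also computes $f_i(\Phi(z))=f_i(z)+(zv_i)_{d+1}\,q$, which on $U_i$ reads $f_i(\Phi(z))=f_i(z)+q$ as $v_i$ is a unit vector; this is the identity behind the expected behaviour $H_i\mapsto H_i+q$ of the output half-spaces, though only $\Phi(S(x))=S(x+q)$ is needed below.)

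Write $Y$ and $Y'$ for the lifts of the original and the translated families. The second property gives $Y'=\Phi(Y)$, hence $\conv Y'=\Phi(\conv Y)$; the first property, applied to $\Phi$ and $\Phi^{-1}$, gives $R\cap\conv Y'=\Phi(R\cap\conv Y)=R\cap\conv Y=:P$. (Equivalently, one can compute directly that $R\cap\conv Y=\{\tfrac1k\sum_{i=1}^k\overline{c_i}\,v_i^T:c_i\in\conv A_i\}$, whose invariance under the translation is immediate from $\sum_i v_i=0$.) So the closest point $p$ of $P$ to the origin is literally the same point for both problems; and since $\Phi$ is an affine bijection it carries the face lattice of $\conv Y$ onto that of $\conv Y'$ preserving inclusions, so the minimal face $F$ of $\conv Y$ containing $p$ goes to the minimal face $\Phi(F)$ of $\conv Y'$ containing $\Phi(p)=p$. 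In particular, for $x\in A_i$ we have $S(x)\in F$ if and only if $S(x+q)=\Phi(S(x))\in\Phi(F)$. By \cref{thm:TSVM-support} a set of support vectors of the original (TSVM) is a Carath\'eodory subfamily of the lifted points on $F$ whose convex hull contains $p$, pulled back to the $A_i$; applying $\Phi$ to such a choice $X$ gives a subfamily of the lifted points on $\Phi(F)$ of the same cardinality whose convex hull contains $\Phi(p)=p$, and these pull back to $X+q$, which is therefore a set of support vectors of the (TSVM) of $A_1+q,\ldots,A_k+q$.

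The point requiring care is the bookkeeping around the mild non-uniqueness of the Carath\'eodory selection hidden in the word ``the'': what the argument delivers cleanly is that $X+q$ is a set of support vectors whenever $X$ is, and to make ``the'' literal one should fix a selection rule that does not refer to a choice of origin. Otherwise the deduction is formal once $\Phi$ is written down. The feature distinguishing this proof from the orthogonal-equivariance ones is that $\Phi$ is a shear rather than an isometry, and it is exactly for this reason that \cref{def:TSVM} must use $R\cap\conv Y$ rather than $\conv Y$: the latter is not $\Phi$-invariant, consistent with (simple TSVM) not being translation-invariant.
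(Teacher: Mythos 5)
Your proof is correct, and both arguments ultimately rest on the same fact --- the translation-invariance of $\conv(Y)\cap R$ --- but you reach it by a genuinely different and more complete route. The paper's proof is two sentences: it observes that for any $x_1,\ldots,x_k$ the barycenter of $\{\bar{x}_1\otimes v_1,\ldots,\bar{x}_k\otimes v_k\}$ equals that of $\{\overline{(x_1+q)}\otimes v_1,\ldots,\overline{(x_k+q)}\otimes v_k\}$ (precisely the parenthetical remark in your third paragraph, via $\sum_i v_i=0$), concludes that $\conv(Y)\cap R$ is unchanged, and then asserts that the support vectors therefore remain the same. Your shear $\Phi=I+E$ buys what that last step leaves implicit: since the support vectors live in $Y$ rather than in $\conv(Y)\cap R$, one needs to transport the minimal face of $\conv Y$ containing $p$ and the Carath\'eodory selection on it to the translated problem, and a global affine bijection that fixes $R$ pointwise and sends $S(x)$ to $S(x+q)$ does exactly this. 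Your identity $f_i(\Phi(z))=f_i(z)+q$ on $U_i$ is a further bonus, showing the output half-spaces themselves translate by $q$, which the paper does not state. Your closing caveat about the mild non-uniqueness of the Carath\'eodory selection applies equally to the paper's own formulation of \cref{thm:TSVM-support}, so it is not a defect of your argument; and your final observation that $\Phi$ is a shear rather than an isometry correctly explains why $R$ is indispensable here and why (simple TSVM) fails to be translation-invariant.
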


\begin{proof}
To find the (TSVM) incuded by $A_1, \ldots, A_k$ we need to compute $\conv(Y) \cap R$.  Notice that this set is invariant under translations of $A = \bigcup_{i=1}^k A_i$.  This is because for any points $x_1, \ldots, x_k$ the barycenter of $\{\bar{x}_1 \otimes v_1, \ldots, \bar{x}_k \otimes v_k\}$ is the same as the barycenter of $\{\overline{(x_1+q)} \otimes v_1, \ldots, \overline{(x_k+q)} \otimes v_k\}$.  Since $\conv(Y) \cap R$ does not change, the set of support vectors remains the same.
\end{proof}

\section{Acknowledgments}
The author thanks Henry Adams for helpful comments.

\begin{bibdiv}
\begin{biblist}

\bib{Amenta:2017ed}{inproceedings}{
      author={Amenta, N.},
      author={De~Loera, J.},
      author={Sober\'on, P.},
       title={{H}elly's theorem: {N}ew variations and applications},
        date={2017},
   booktitle={Algebraic and geometric methods in discrete mathematics},
      editor={Harrington, Heather},
      editor={Omar, Mohamed},
      editor={Wright, Matthew},
   publisher={American Mathematical Society},
       pages={55\ndash 96},
}

\bib{Adams2022}{article}{
      author={Adams, Henry},
      author={Farnell, Elin},
      author={Story, Brittany},
       title={Support vector machines and {R}adon's theorem},
        date={2022},
        ISSN={2639-8001},
     journal={Found. Data Sci.},
      volume={4},
      number={4},
       pages={467\ndash 494},
         url={https://doi.org/10.3934/fods.2022017},
}

\bib{Barany:2016vx}{article}{
      author={Bárány, Imre},
      author={Blagojević, Pavle V.~M.},
      author={Ziegler, G\"unter~M.},
       title={{Tverberg's Theorem at 50: Extensions and Counterexamples}},
        date={2016},
     journal={Notices of the American Mathematical Society},
      volume={63},
       pages={732 \ndash  739},
}

\bib{Boser1992}{inproceedings}{
      author={Boser, Bernhard~E.},
      author={Guyon, Isabelle~M.},
      author={Vapnik, Vladimir~N.},
       title={A training algorithm for optimal margin classifiers},
        date={1992},
   booktitle={Proceedings of the 5th annual acm workshop on computational
  learning theory},
       pages={144\ndash 152},
}

\bib{Barany:1996bz}{incollection}{
      author={Bárány, Imre},
      author={Onn, Shmuel},
       title={{Colourful linear programming}},
    language={English},
        date={1996},
      series={Integer Programming and Combinatorial Optimization},
      volume={1084},
   publisher={Springer Berlin Heidelberg},
       pages={1 \ndash  15},
}

\bib{Barany:2018fy}{article}{
      author={Bárány, Imre},
      author={Soberón, Pablo},
       title={{Tverberg's theorem is 50 years old: A survey}},
        date={2018},
     journal={Bulletin of the American Mathematical Society},
      volume={55},
      number={4},
       pages={459 \ndash  492},
}

\bib{Blagojevic:2017bl}{incollection}{
      author={Blagojevi\'{c}, Pavle V.~M.},
      author={Ziegler, G\"unter~M.},
       title={{Beyond the Borsuk--Ulam Theorem: The Topological Tverberg
  Story}},
        date={2017},
      series={A Journey Through Discrete Mathematics},
      volume={34},
   publisher={Springer, Cham},
       pages={273 \ndash  341},
}

\bib{crammer2001algorithmic}{article}{
      author={Crammer, Koby},
      author={Singer, Yoram},
       title={On the algorithmic implementation of multiclass kernel-based
  vector machines},
        date={2001},
     journal={Journal of machine learning research},
      volume={2},
      number={Dec},
       pages={265\ndash 292},
}

\bib{duan2005best}{inproceedings}{
      author={Duan, Kai-Bo},
      author={Keerthi, S~Sathiya},
       title={Which is the best multiclass svm method? an empirical study},
organization={Springer},
        date={2005},
   booktitle={International workshop on multiple classifier systems},
       pages={278\ndash 285},
}

\bib{de2020stochastic}{article}{
      author={De~Loera, Jesus~A},
      author={Hogan, Thomas},
       title={Stochastic tverberg theorems with applications in multiclass
  logistic regression, separability, and centerpoints of data},
        date={2020},
     journal={SIAM Journal on Mathematics of Data Science},
      volume={2},
      number={4},
       pages={1151\ndash 1166},
}

\bib{Franc.2002}{article}{
      author={Franc, Voit\v{c}ch},
      author={Hlav\'a\v{c}, V\'aclav},
       title={{Multi-class Support Vector Machine}},
        date={2002},
     journal={Object recognition supported by user interaction for service
  robots},
      volume={2},
       pages={236\ndash 239},
}

\bib{Gartner:2008bp}{article}{
      author={G\"artner, Bernd},
      author={Matou\v{s}ek, Ji\v{r}\'i},
      author={R{\"u}st, Leo},
      author={{\v{S}}kovro\v{n}, Petr},
       title={{Violator spaces: Structure and algorithms}},
        date={2008},
     journal={Discrete Applied Mathematics},
      volume={156},
      number={11},
       pages={2124\ndash 2141},
}

\bib{hearst1998support}{article}{
      author={Hearst, Marti~A.},
      author={Dumais, Susan~T},
      author={Osuna, Edgar},
      author={Platt, John},
      author={Scholkopf, Bernhard},
       title={Support vector machines},
        date={1998},
     journal={IEEE Intelligent Systems and their applications},
      volume={13},
      number={4},
       pages={18\ndash 28},
}

\bib{hsu2002comparison}{article}{
      author={Hsu, Chih-Wei},
      author={Lin, Chih-Jen},
       title={A comparison of methods for multiclass support vector machines},
        date={2002},
     journal={IEEE transactions on Neural Networks},
      volume={13},
      number={2},
       pages={415\ndash 425},
}

\bib{har2020journey}{article}{
      author={Har-Peled, Sariel},
      author={Jones, Mitchell},
       title={Journey to the center of the point set},
        date={2020},
     journal={ACM Transactions on Algorithms (TALG)},
      volume={17},
      number={1},
       pages={1\ndash 21},
}

\bib{Radon:1921vh}{article}{
      author={Radon, Johann},
       title={{Mengen konvexer Körper, die einen gemeinsamen Punkt
  enthalten}},
        date={1921},
     journal={Mathematische Annalen},
      volume={83},
      number={1},
       pages={113 \ndash  115},
}

\bib{Sarkaria:1992vt}{article}{
      author={Sarkaria, Karanbir~S.},
       title={{Tverberg’s theorem via number fields}},
        date={1992},
     journal={Israel journal of mathematics},
      volume={79},
      number={2},
       pages={317 \ndash  320},
}

\bib{steinwart2008support}{book}{
      author={Steinwart, Ingo},
      author={Christmann, Andreas},
       title={Support vector machines},
   publisher={Springer Science \& Business Media},
        date={2008},
}

\bib{soberon2015equal}{article}{
      author={Sober{\'o}n, Pablo},
       title={Equal coefficients and tolerance in coloured tverberg
  partitions},
        date={2015},
     journal={Combinatorica},
      volume={35},
      number={2},
       pages={235\ndash 252},
}

\bib{Sarkar:2020uk}{article}{
      author={Sarkar, Sherry},
      author={Sober\'{o}n, Pablo},
       title={Tolerance for colorful {T}verberg partitions},
        date={2022},
     journal={European J. Combin.},
      volume={103},
       pages={Paper No. 103527},
}

\bib{Sharir:1992ih}{inproceedings}{
      author={Sharir, Micha},
      author={Welzl, Emo},
       title={{A combinatorial bound for linear programming and related
  problems}},
        date={1992},
      series={Annual Symposium on Theoretical Aspects of Computer Science},
      volume={577},
       pages={567 \ndash  579},
}

\bib{Tverberg:1966tb}{article}{
      author={Tverberg, Helge},
       title={{A generalization of Radon’s theorem}},
        date={1966},
     journal={J. London Math. Soc},
      volume={41},
      number={1},
       pages={123 \ndash  128},
}

\bib{Veelaert.2015}{article}{
      author={Veelaert, Peter},
       title={{Combinatorial properties of support vectors of separating
  hyperplanes}},
        date={2015},
        ISSN={1012-2443},
     journal={Annals of Mathematics and Artificial Intelligence},
      volume={75},
      number={1-2},
       pages={89\ndash 115},
}

\end{biblist}
\end{bibdiv}

\end{document}